\newtheorem{lemma}{Lemma}
\title{\LARGE \bf
Fusion of Indirect Methods and Iterative Learning for Persistent Velocity Trajectory Optimization of a Sustainably Powered Autonomous Surface Vessel
}
\author{Kavin M. Govindarajan, Devansh R Agrawal, Dimitra Panagou, and Chris Vermillion
\thanks{Supported by NSF Award Number 2223844.}
\thanks{Kavin M. Govindarajan is a Graduate Research Assistant with the Department of Robotics, University of Michigan, Ann Arbor, MI 48109, USA.     {\tt\small kmgovind@umich.edu}}%
\thanks{Devansh R Agrawal is a Graduate Research Assistant with the Department of Aerospace Engineering, University of Michigan. Dimitra Panagou is an Associate Professor with the Department of Robotics and the Department of Aerospace Engineering, University of Michigan, Ann-Arbor, MI, 48109, USA. {\tt\small devansh@umich.edu, dpanagou@umich.edu}}%
\thanks{Chris Vermillion is an Associate Professor with the Department of Mechanical Engineering, University of Michigan, Ann-Arbor, MI, 48109, USA. {\tt\small cvermill@umich.edu}}%
}
\begin{document}

\maketitle
\thispagestyle{empty}
\pagestyle{empty}

\begin{abstract}

In this paper, we present the methodology and results for a real-time velocity trajectory optimization for a solar-powered autonomous surface vessel (ASV), where we combine indirect optimal control techniques with iterative learning. The ASV exhibits cyclic operation due to the nature of the solar profile, but weather patterns create inevitable disturbances in this profile. The nature of the problem results in a formulation where the satisfaction of pointwise-in-time state of charge constraints does not generally guarantee persistent feasibility, and the goal is to maximize information gathered over a very long (ultimately persistent) time duration. To address these challenges, we first use barrier functions to tighten pointwise-in-time state of charge constraints by the minimal amount necessary to achieve persistent feasibility. We then use indirect methods to derive a simple switching control law, where the optimal velocity is shown to be an undetermined constant value during each constraint-inactive time segment. To identify this optimal constant velocity (which can vary from one segment to the next), we employ an iterative learning approach. The result is a simple closed-form control law that does not require a solar forecast. We present simulation-based validation results, based on a model of the SeaTrac SP-48 ASV and solar data from the North Carolina coast. These simulation results show that the proposed methodology, which amounts to a closed-form controller and simple iterative learning update law, performs nearly as well as a model predictive control approach that requires an accurate future solar forecast and significantly greater computational capability.

\end{abstract}

\section{INTRODUCTION}

The collection of oceanographic data is useful for many applications: Characterizing ocean currents is useful for identifying candidate marine energy-harvesting sites \cite{naik2024integrated}, surface temperature data is useful for weather forecasting \cite{huda2023machine}, and salinity data is useful for better understanding processes driving climate change \cite{seim2022overview}. To collect such data, several observational tools are currently employed, including moored sensor platforms \cite{adcp}, boat-mounted platforms \cite{muglia_transect}, high-frequency radar \cite{radar}, and undersea gliders (see \cite{glider}, \cite{glider2}). However, these tools often result in sparse and/or short-duration measurements.

To collect spatially granular, long-duration oceanographic data, the aforementioned existing sparse and/or short-duration measurements must be supplemented by granular, persistent, and autonomous observations. Renewably powered marine mobile robots, such as sailing drones \cite{gentemann2020adaptively} and solar-powered autonomous surface vessels (ASVs) \cite{seatrac}, aim to satisfy those needs. These systems can operate persistent missions while achieving the mobility and granular data collection required for certain missions.

In this work, we consider the long-duration operation of a solar-powered autonomous surface vessel (ASV) (pictured in Fig.\ref{fig:seatrac}) in an ocean environment. We treat distance traveled as a proxy for information, seeking to compute a velocity trajectory that maximizes the distance traveled by the ASV over the mission horizon, while adhering to the energetic constraints imposed by the limited battery capacity and available solar resource. In previous work, we utilized model predictive control (MPC)-based strategies to approximate the infinite-horizon problem \cite{govindarajan2023} via a finite-horizon online optimization framework with a terminal reward function. The terminal reward was designed to approximate the infinite-horizon benefit associated with a given state of charge at the end of the horizon. However, the approximations presented were heuristic, and tuning the terminal reward function was largely a trial-and-error process. Furthermore, the optimization relied on a prediction of the solar resource, and errors in that prediction resulted in diminished performance. Finally, the computational demands associated with this control approach are not insignificant. This becomes particularly challenging if a stochastic MPC formulation is pursued, in light of the uncertainties in the solar forecast.  

\begin{figure}
    \centering
    \includegraphics[width = \columnwidth]{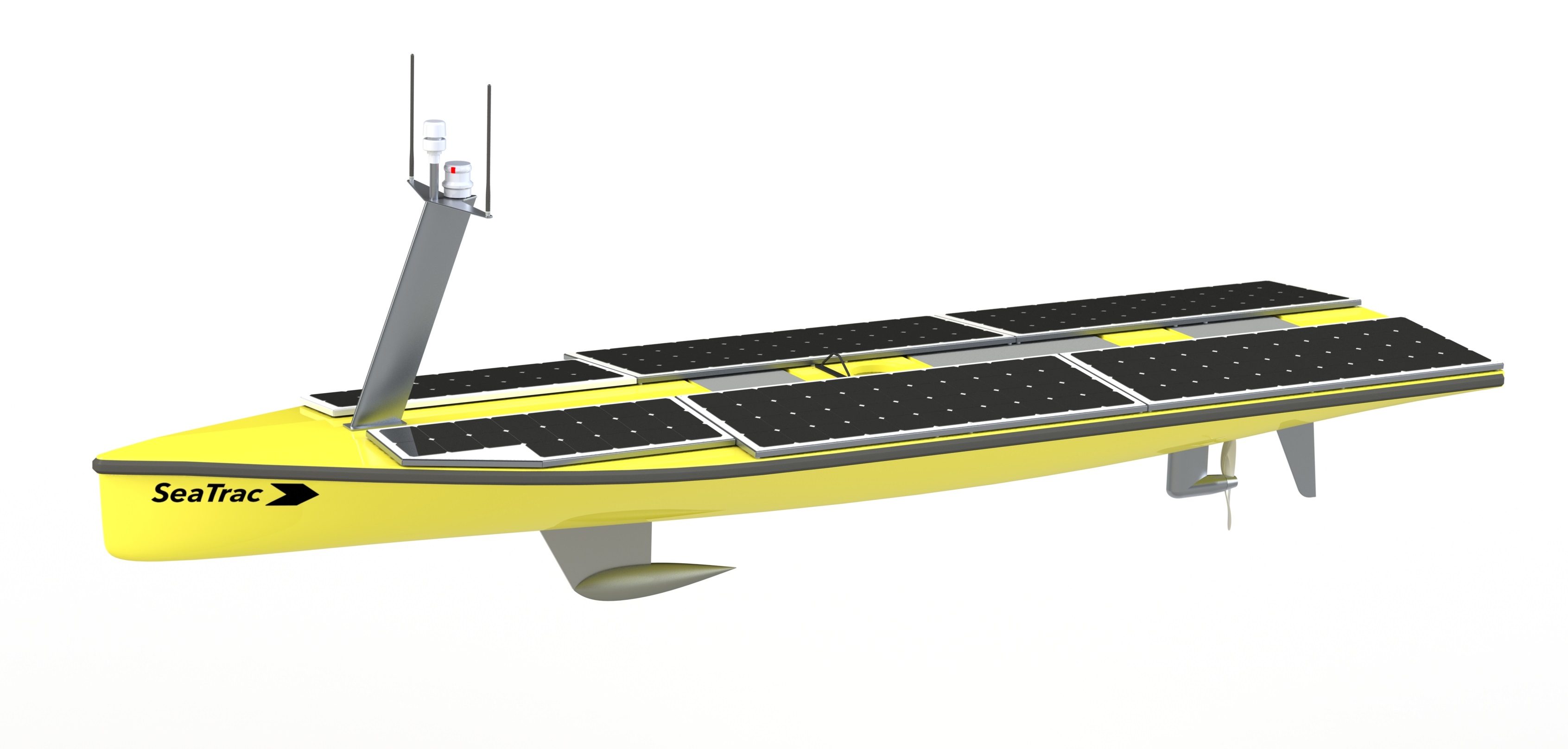}
    \caption{SeaTrac ASV considered in this work \cite{seatrac} - Image used with permission.}
    \label{fig:seatrac}
\end{figure}

To address the challenges with prior MPC-based approaches, we seek in this work to use tools from indirect optimal control to derive necessary conditions for provably optimal velocity trajectories in the presence of a time-varying solar resource and pointwise-in-time state of charge constraints. The goal of this study is twofold. First, we wish to use the insight from indirect methods to benchmark MPC-based results and better-understand the decisions being made by the MPC optimization. Second, we wish to use the results of indirect optimal control to derive simple control laws that can approximate the performance of MPC while facilitating a much simpler tuning process and significantly reduced real-time computational demands, all while eliminating the need for a forecast.

Existing literature in the transportation area, particularly on the topic of of optimal control of hybrid/electric vehicles and railway vehicles, has explored similar problems. Specifically, \cite{Miyatake2009EnergySS} has formulated \textit{finite-horizon} optimization problems subject to energetic constraints. Furthermore, an approach to solve these problems using Pontryagin's Minimum Principle (PMP) has been explored in \cite{abbas2019synthesis}. Often, the resulting necessary conditions for optimality (which manifest themselves as nonlinear mixed boundary condition differential equations) provide parametric forms for the optimal state, co-state, and control trajectories but do not lend themselves to a clean method for determining the unknown parameters. To overcome this challenge, iterative learning control (ILC) has been used to learn these parameters in the case of cyclically operating systems \cite{hu2016integrated}.

Given the cyclic nature of ASV velocity trajectory optimization that arises from diurnal patterns, the PMP/ILC approach described above serves as a sensible solution strategy. However, the existing literature does not address several challenges that we face in our problem. In particular, the existing literature only considers finite-horizon optimizations, where any cyclic element of the optimal control problem involves repeating a task under the same dynamics and same environmental conditions. The solar resource that powers the ASV, however, enables persistent missions where the resource from one day to the next can nonetheless vary. Furthermore, the time-varying environmental resource creates an issue of persistent feasibility in terms of the battery state of charge (SOC) constraints. Specifically, satisfaction of the pointwise-in-time SOC constraint at time $t$ does not guarantee feasibility at later time instances. 

In this work, we address the velocity trajectory optimization problem subject to the energetic dynamics and constraints associated with a time-varying energetic resource and long-duration mission, under an environmental resource that exhibits cyclic diurnal patters but nevertheless exhibits day-to-day variability. We first use a barrier function approach to introduce minimally tightened battery SOC constraints that guarantee persistent feasibility. We then utilize indirect optimal control methods to derive necessary conditions for optimality. These necessary conditions lead to a switching control law where the optimal velocity is piecewise constant during every time interval when the SOC constraints are not active. Because the necessary conditions for optimality do not provide the value of that optimal constant velocity (which can differ in each constraint-inactive interval), we present an ILC approach to update an estimate of that constant velocity. 

Based on a detailed model of the SeaTrac SP-48 ASV, we compare our method against several benchmarks. The first is a constant velocity command (when feasible due to SOC constraints) that ensures the energy consumed is equivalent to the energy input from the solar resource over the mission duration. We further compare this with the MPC approach developed in our earlier work in \cite{govindarajan2023}. These strategies are then all compared with a benchmark that represents the maximum achievable performance for the vehicle if it were not subject to SOC constraints. Ultimately, simulation results based on real solar data off the coast of North Carolina show that the ILC-based controller performs nearly as well as an MPC-based approach, without requiring a forecast or the associated computational burden.

In summary, the core contributions of this work include:
\begin{itemize}
    \item Minimally tightened battery SOC constraints based upon a barrier function approach to guarantee persistent feasibility;
    \item An optimal control law that is shown based on indirect optimal control methods to be piecewise constant;
    \item An iterative learning approach to estimate the value of the optimal control signal;
    \item A performance comparison to approaches such as MPC in a case-study simulation that demonstrates the comparable performance of the proposed approach without the need for an environmental forecast or the associated computational burden. 
\end{itemize}

\section{MODEL \& OPTIMIZATION PROBLEM}
\subsection{Optimal Control Formulation}
The problem of controlling the ASV velocity to maximize the distance traveled can be represented as a fixed final state, free final time optimal control problem as follows:
\begin{equation}\label{eqn:objective_function}
    \min\limits_{u(t)} J = \int_{0}^{t_f} -u(t) dt
\end{equation}
subject to the dynamics and constraints:
\begin{subequations}
    \begin{align}
        \dot{b}(t) &= P_{in}(t) - k_{h} - k_{m} u(t)^3 \label{eqn:soc_dynamics}\\
        b(t_f) &= b(0) \label{eqn:terminal_constraint}\\
        b_{min} &\leq b(t) \leq b_{max} && t \in [0, t_f] \label{eqn:state_constraint}\\
        u_{min} &\leq u(t) \leq u_{max} && t \in [0, t_f] \label{eqn:control_constraint}
    \end{align}
\end{subequations}
where $u(t)$ represents the ASV velocity, $b(t)$ represents the ASV state of charge (SOC) as a function of time, $P_{in}(t)$ represents the incoming energy (from the solar irradiance), $k_h$ represents a hotel load (constant electrical power consumption from onboard electronics and sensors), $k_m$ represents a gain that represents the power consumed by the ASV's motor, $b_{min}, b_{max}$ represent the minimum and maximum SOC for the ASV, and $u_{min}, u_{max}$ represent the minimum and maximum ASV velocity commands.

For all simulations presented in this work, we use the SeaTrac SP-48 as a reference model. The parameters associated with this vehicle are shown in Table \ref{tab:sp48}.

\begin{table}[h]
    \caption{SeaTrac SP-48 Parameters}
    \label{tab:sp48}
    \begin{center}
        \begin{tabular}{|c|c|c|c|}
        \hline
        \textbf{Variable Name} & \textbf{Symbol} & \textbf{Value} & \textbf{Units}\\
        \hline
        Hotel Load & $k_h$ & 10 & $W$\\
        Motor Constant & $k_m$ & 83 & $kg \cdot m^{-1}$\\
        Minimum State of Charge & $b_{min}$ & 0 & $Wh$\\
        Maximum State of Charge & $b_{max}$ & 6500 & $Wh$\\
        Minimum Speed & $u_{min}$ & 0 & $ m \cdot s^{-1}$\\
        Maximum Speed & $u_{max}$ & 2.315 & $ m \cdot s^{-1}$\\
        \hline
        \end{tabular}
    \end{center}
\end{table}

\subsection{Solar Irradiance Model}
We can represent the solar irradiance either with an idealized model or with real solar data. In this paper, we examine the performance of proposed control strategies under both profiles. The former profile allows us to evaluate the performance of various strategies under a relatively controlled environment, whereas the latter profile allows us to showcase the ability of our proposed methodology to perform in a realistic environment.

Our idealized solar irradiance model is described in \cite{hulstrom2003irradiance}. To summarize, the time-varying irradiance is given by:
\begin{equation}\label{eqn:irr_model}
    P_{in}(t) = \max\left(0, D_0(t) + D_1(t) \cos\left( \frac{2 \pi t}{T} \right) \right)
\end{equation}
where $D_0(t)$ represents the average level of solar irradiance, $D_1(t)$ represents the amplitude of oscillations about the average solar irradiance, and $T$ represents the period with which the solar irradiance oscillates (generally 24 hours). In this model, $D_0(t)$ and $D_1(t)$ are functions of global position, specifically latitude, and time of year. For simplicity, in this work, we fix the boat to travel along the line of latitude that intersects Cape Hatteras, North Carolina. A snapshot of the solar irradiance profile on a single day generated by the model in Eqn. \ref{eqn:irr_model} is shown in Fig. \ref{fig:model-irradiance}.

\begin{figure}
    \centering
    \includegraphics[width = \columnwidth]{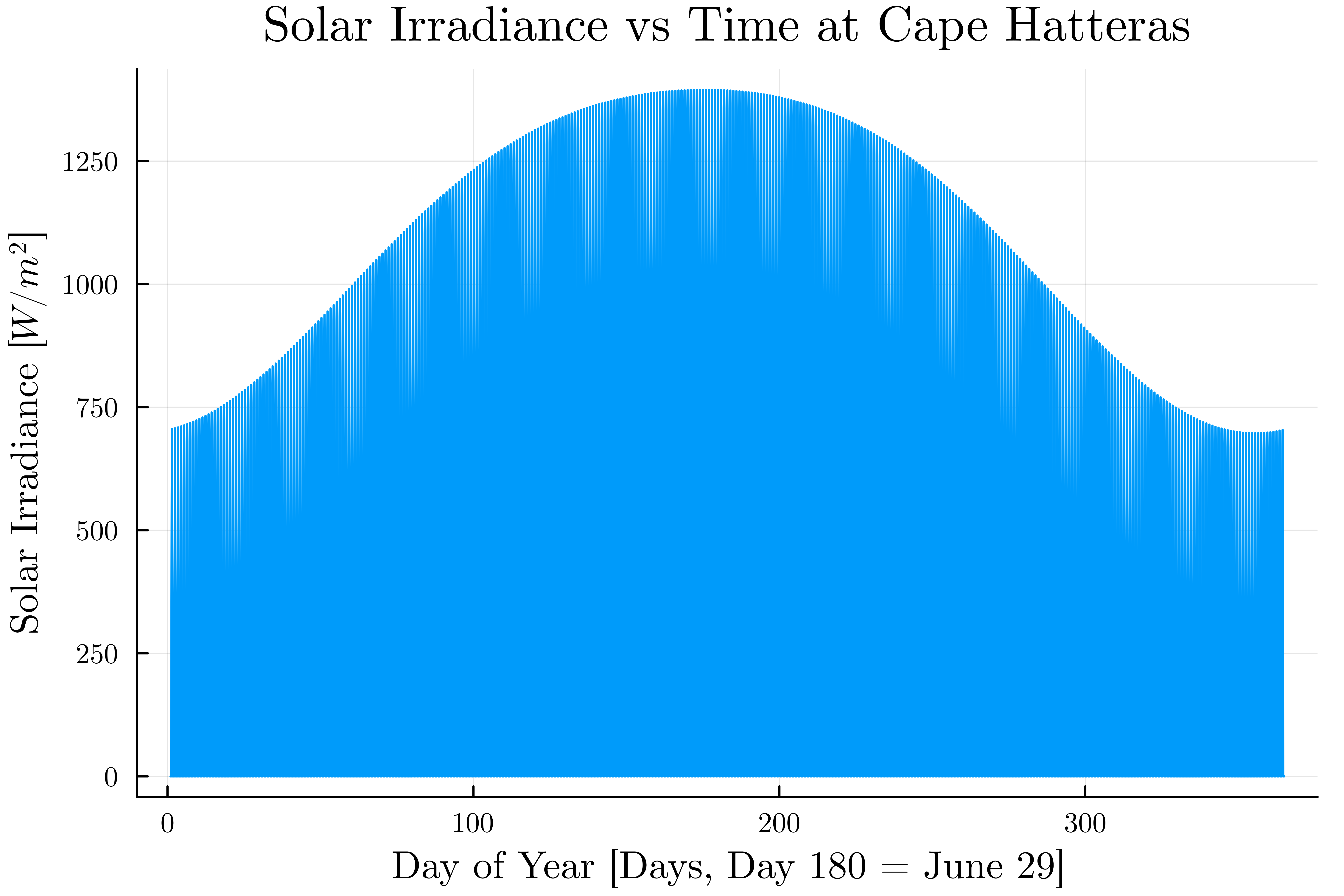}
    \caption{Idealized solar irradiance profile as modeled by Eqn. \ref{eqn:irr_model}}
    \label{fig:model-irradiance}
\end{figure}

While the aforementioned model allows for the evaluation of mission planning algorithms in a relatively controlled environment, it does not account for disturbances caused by cloud-cover or other real-world phenomena that may reduce the actual solar irradiance that reaches the surface. As such, we also utilize the solar irradiance data at Cape Hatteras in the year 2022 as provided by the ERA5 global reanalysis \cite{hersbach2020era5}. The plot of the corresponding irradiance over the year is shown in Fig. \ref{fig:2022-irradiance}.

\begin{figure}
    \centering
    \includegraphics[width = \columnwidth]{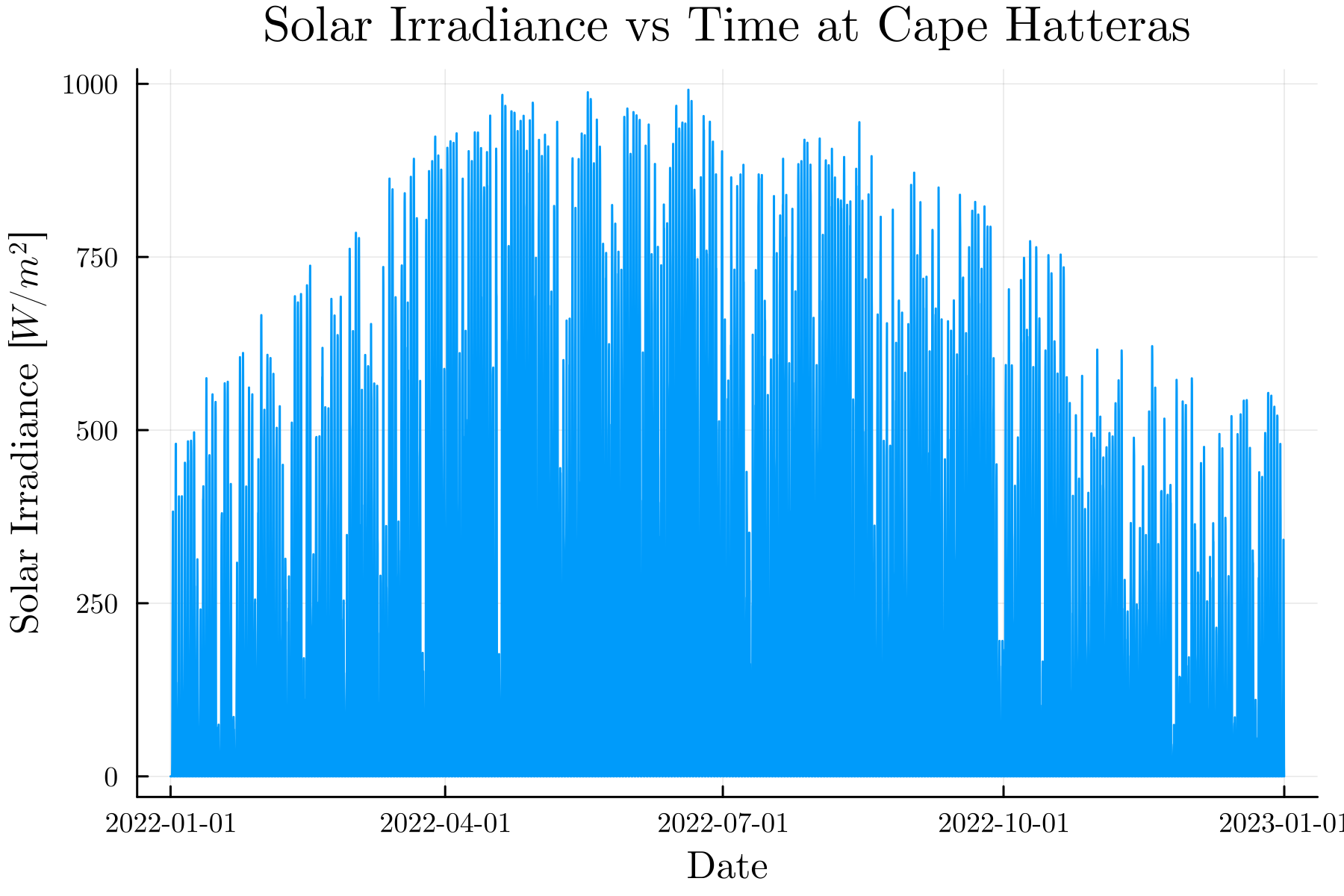}
    \caption{Actual solar irradiance at Cape Hatteras in 2022}
    \label{fig:2022-irradiance}
\end{figure}



\section{OPTIMAL CONTROL FORMULATION: BARRIER FUNCTIONS \& INDIRECT METHODS}
We arrive at a simple set of equations for the optimal velocity trajectory through two steps. First, we construct barrier functions that tighten the SOC constraints by the minimal amount possible to achieve persistent feasibility. Second, we utilize indirect methods to derive necessary conditions for optimality, which show that the optimal velocity is constant during each constraint-inactive time interval.

\subsection{Barrier Functions for Persistent Feasibility}
We are subject to the dynamics and constraints shown in Eqns. \ref{eqn:soc_dynamics}-\ref{eqn:control_constraint}. However, as written, there exist circumstances where satisfaction of pointwise-in-time SOC constraints at time $t$ will not guarantee feasibility of these constraints at some future time. For example, suppose that $b(t) = b_{min}$ and the anticipated future solar resource is zero for some period of time. Then, the hotel load represented by $k_h$ will force the ASV's SOC to fall below $b_{min}$, violating our state constraints. To avoid scenarios such as these, we need to ensure that satisfaction of pointwise-in-time state constraints at any time $t$ guarantees feasibility of these constraints at any future time, which is referred to as \emph{persistent feasibility}. To do this, we will reformulate the state constraints into time-varying barrier functions, which encode this notion of persistent feasibility.

We begin by considering the lower state constraint. We do this by first defining the \emph{energy deficit}, which represents the energy consumed by the ASV assuming there exists a solar irradiance profile/charging profile $P_{in}(t)$, a hotel load $k_h$, and that $u(t) = 0$ for all $t$ in the mission. This is described mathematically by:
\begin{equation}\label{eqn:energy_deficit}
    \epsilon_{-} = \int_{t_0}^{t} (k_h - P_{in}(t)) dt
\end{equation}
\noindent where a plot of this curve is shown in Fig. \ref{fig:energy-deficit}.

\begin{figure}
    \centering
    \includegraphics[width = \columnwidth]{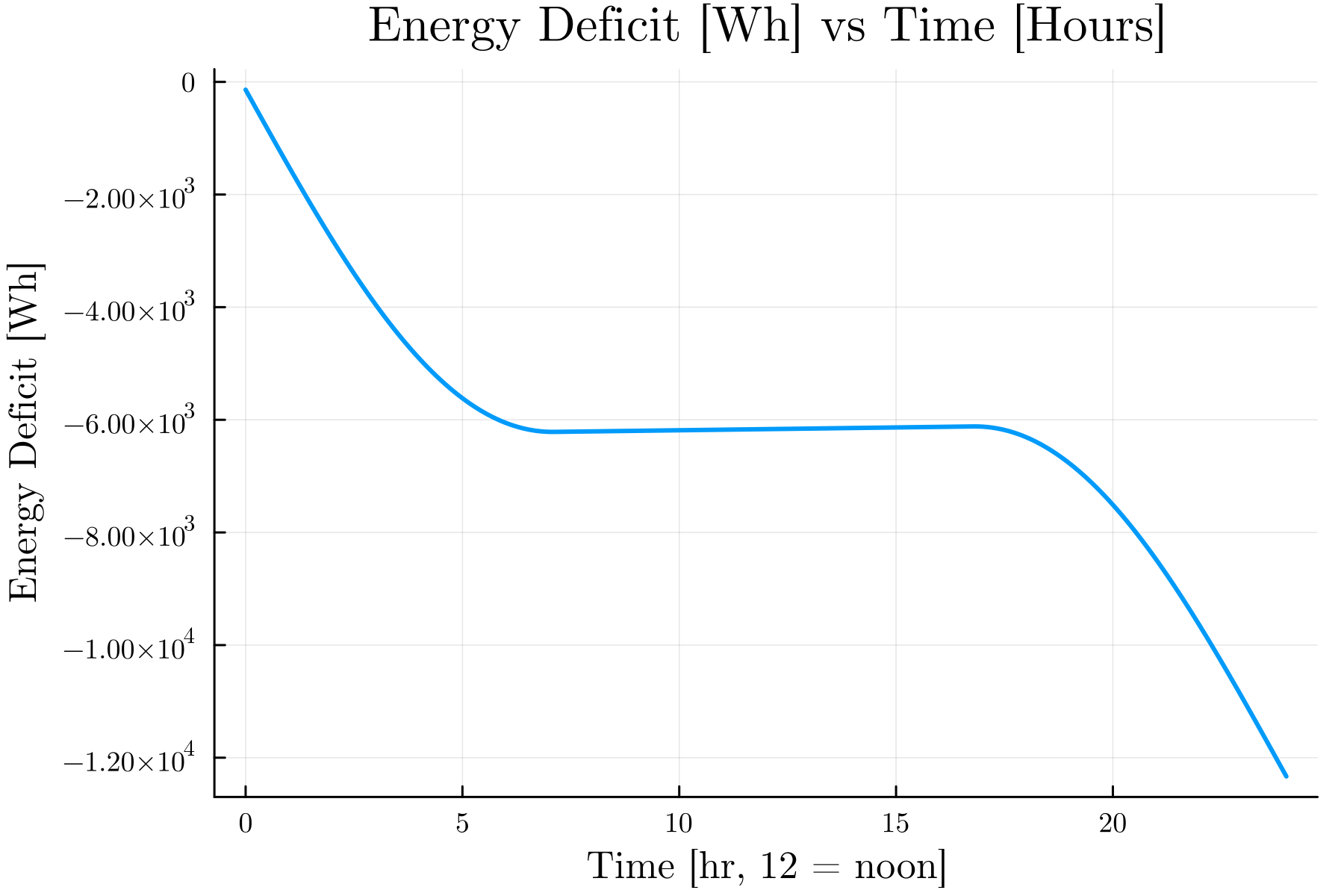}
    \caption{Energy deficit curve}
    \label{fig:energy-deficit}
\end{figure}

From this, we can then compute the lower SOC barrier by:
\begin{align}
    \epsilon_{-}^{\dag} &= \epsilon_{-}(t) - \epsilon_{-}(t_1)\\
    b_l(t_1) &= \begin{cases}
        \sup\limits_{t_2 \in [t_1, t_f]} \epsilon_{-}^{\dag} (t_2) & \sup\limits_{t_2 \in [t_1, t_f]} \epsilon_{-}^{\dag} (t_2) > 0\\
        0 & \text{ otherwise}
    \end{cases}\label{eqn:lower_soc}
\end{align}

To establish the upper SOC constraint, we can compute the \emph{energy surplus} by:
\begin{equation}\label{eqn:energy_surplus}
    \epsilon_{+} = \int_{t_0}^{t}(P_{in}(t) - k_h - k_m u_{max}^{3}(t))dt
\end{equation}

\begin{figure}
    \centering
    \includegraphics[width = \columnwidth]{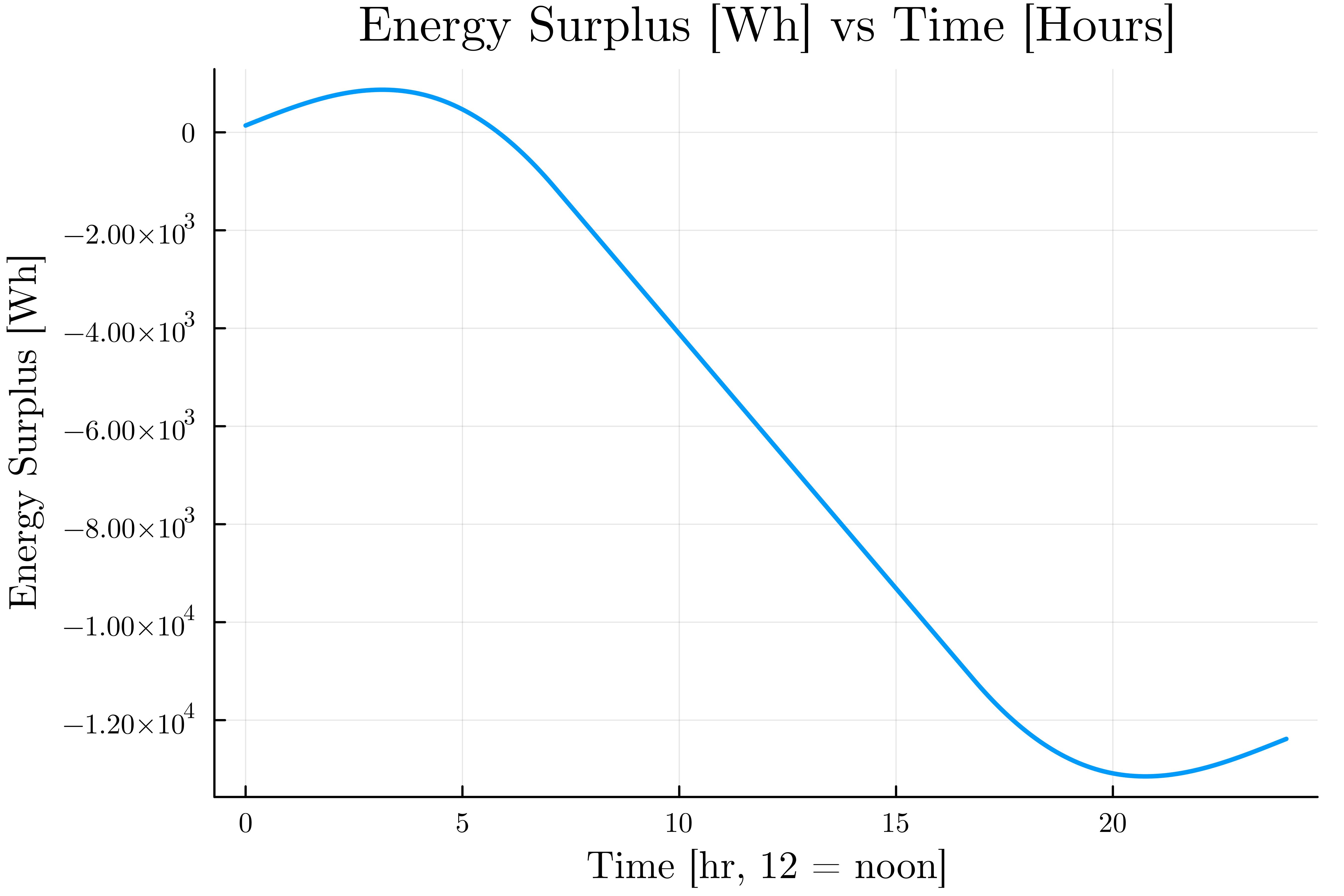}
    \caption{Energy surplus curve}
    \label{fig:energy-surplus}
\end{figure}

We can then compute the upper SOC barrier by:
\begin{align}
    \epsilon_{+}^{\dag} &= \epsilon_{+}(t) - \epsilon_{+}(t_1)\\
    b_u(t_1) &= \begin{cases}
        \sup\limits_{t_2 \in [t_1, t_f]} \epsilon_{+}^{\dag} (t_2) & \sup\limits_{t_2 \in [t_1, t_f]} \epsilon_{+}^{\dag} (t_2) > 0\\
        0 & \text{ otherwise}
    \end{cases}\label{eqn:upper_soc}
\end{align}

\noindent From Eqns. \ref{eqn:lower_soc} and \ref{eqn:upper_soc}, we can then establish SOC constraints as shown in Fig. \ref{fig:soc-barriers}.
\begin{figure}
    \centering
    \includegraphics[width = \columnwidth]{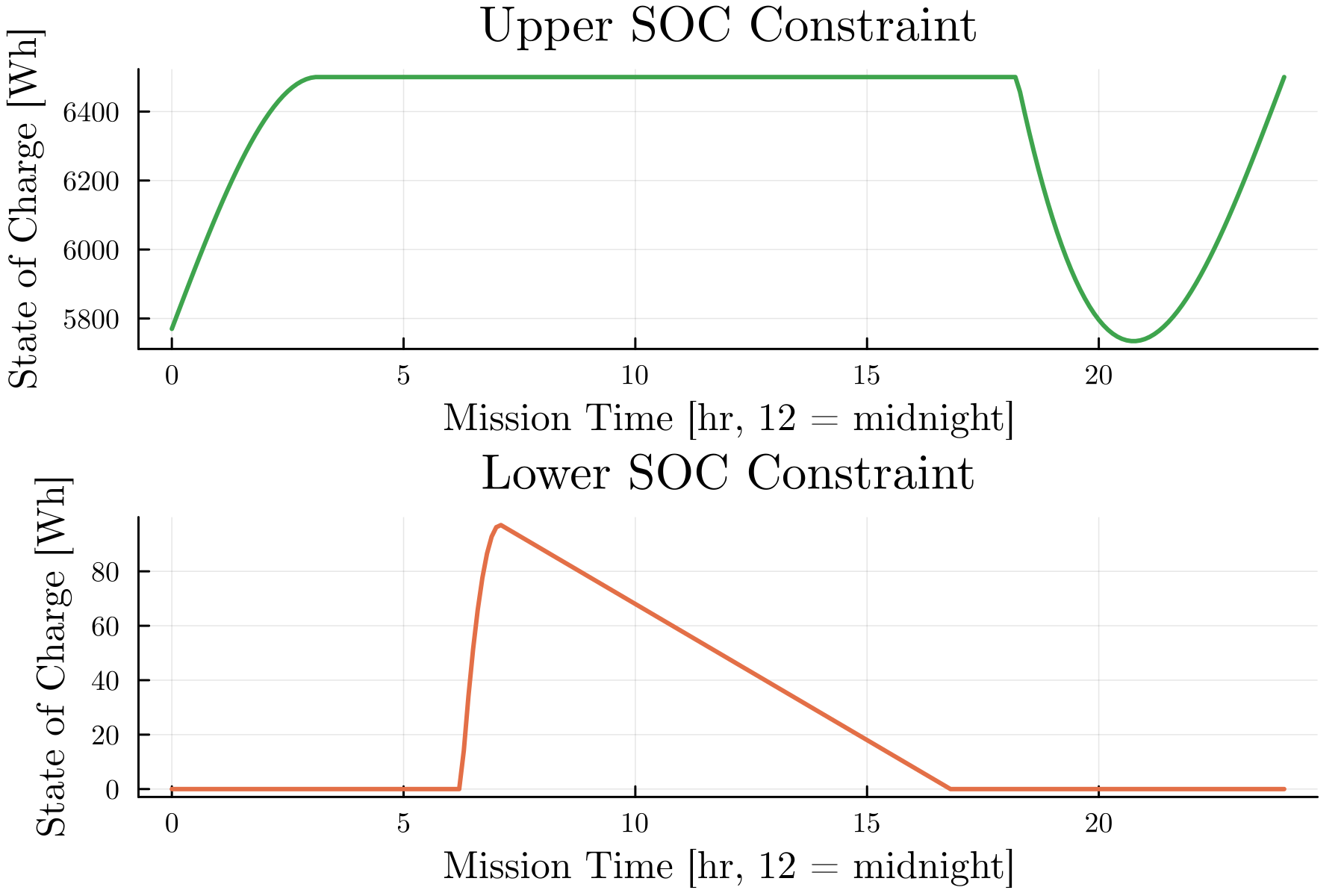}
    \caption{SOC barrier functions. These represent upper and lower limits on SOC (as functions of the time of day) that guarantee persistent feasibility.}
    \label{fig:soc-barriers}
\end{figure}

\subsection{Indirect Methods for Deriving the Optimal Velocity Profile}
Given the barrier functions introduced in the previous subsection, the original pointwise-in-time SOC constraints can be reformulated as follows:
\begin{equation}
    b_{l}(t) \leq b(t) \leq b_{u}(t),
\end{equation}
\noindent with all other constraints remaining the same as the original constraints of Eqns. \ref{eqn:soc_dynamics}-\ref{eqn:control_constraint}. With this optimal control formulation in place, it is straightforward to use indirect methods to arrive at the following Lemma:

\begin{lemma}\label{lem:constant}
    Whenever $b_{l}(t) < b(t) < b_{u}(t)$ (i.e., whenever tightened pointwise-in-time SOC constraints are inactive), the optimal velocity, $u^{*}(t)$, is constant.
\end{lemma}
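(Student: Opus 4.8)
The plan is to apply Pontryagin's Minimum Principle to the reformulated problem and show that on any constraint-inactive subinterval the costate is constant, which then forces the optimal control to be constant. First I would form the Hamiltonian $H = -u + \lambda(t)\bigl(P_{in}(t) - k_h - k_m u^3\bigr)$, where $\lambda(t)$ is the costate associated with the SOC dynamics \eqref{eqn:soc_dynamics}. On a subinterval where $b_l(t) < b(t) < b_u(t)$, the state constraints are inactive, so the costate equation reduces to $\dot{\lambda}(t) = -\partial H/\partial b = 0$ (the dynamics and cost have no explicit $b$-dependence). Hence $\lambda$ is a constant, say $\bar\lambda$, on that subinterval.

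Next I would invoke the minimum condition: $u^*(t) = \arg\min_{u \in [u_{min}, u_{max}]} \bigl(-u - \bar\lambda k_m u^3\bigr)$ (dropping the $\bar\lambda P_{in}(t)$ and $-\bar\lambda k_h$ terms, which do not depend on $u$). Since this is the minimization of a fixed (time-independent) scalar function of $u$ over a fixed interval, its minimizer is a fixed value — either an interior stationary point satisfying $-1 - 3\bar\lambda k_m u^2 = 0$, i.e. $u = \sqrt{-1/(3\bar\lambda k_m)}$ (which requires $\bar\lambda < 0$), or one of the endpoints $u_{min}$, $u_{max}$. In every case $u^*$ does not vary with $t$ on the subinterval, which is exactly the claim. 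I would also note that the coefficient $k_m > 0$ and, physically, $\bar\lambda \le 0$ on inactive arcs (more stored charge is weakly beneficial for maximizing distance), so the interior branch is the generic case and is well-defined; but the argument does not actually need this sign fact, since even a degenerate or positive $\bar\lambda$ yields a constant (boundary) minimizer.

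The main obstacle is handling the technical hypotheses needed for PMP to apply cleanly with the time-varying state constraints $b_l(t), b_u(t)$: one must argue that on the open constraint-inactive arc the problem behaves locally like an unconstrained one, so that the standard costate equation $\dot\lambda = -H_b$ holds there without contributions from the state-constraint multipliers (those multipliers are supported on the active set). I would address this by appealing to the standard direct-adjoining (or indirect-adjoining) formulation of the maximum principle for state-constrained problems, citing that the measure multiplier associated with $b_l, b_u$ is zero on the interior of any inactive arc, so the jump/accumulation terms vanish and $\lambda$ is absolutely continuous with zero derivative there. A secondary, minor point is the non-smoothness of $P_{in}(t)$ from the $\max(0,\cdot)$ in \eqref{eqn:irr_model} and the nonsmooth barriers $b_l, b_u$; since these enter $H$ only additively and independently of $u$, they do not affect either the costate's vanishing derivative in $b$ or the pointwise minimization over $u$, so the conclusion is unaffected.
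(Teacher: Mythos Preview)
Your argument is correct and reaches the same conclusion, but the mechanism you use to handle the state constraints differs from the paper's. The paper follows the classical augmented-state trick from Kirk: it introduces an auxiliary state $x_2$ whose derivative is the (squared) constraint violation, builds a two-costate Hamiltonian, and then observes that all the Heaviside and impulse terms in $\dot p_1$ vanish when $b_l(t)<b(t)<b_u(t)$, so $p_1$ is constant and $u^*=\sqrt{-1/(3k_m p_1)}$ from the stationarity condition $\partial H/\partial u=0$. You instead invoke the state-constrained maximum principle directly, noting that the measure multiplier associated with the SOC bounds is supported on the active set, so on any interior (inactive) arc the costate is absolutely continuous with $\dot\lambda=-H_b=0$, and then you apply the full pointwise minimum condition over $[u_{\min},u_{\max}]$ rather than only stationarity. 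Your route is arguably cleaner---it avoids the somewhat formal impulse-function calculus of the augmented approach and, by using the minimum condition instead of just $\partial H/\partial u=0$, it also covers the boundary cases $u^*\in\{u_{\min},u_{\max}\}$ without a separate argument. The paper's route has the pedagogical advantage of being self-contained (no appeal to measure-valued multipliers) and is the textbook method the authors cite, but both approaches deliver exactly the same constant-on-inactive-arcs conclusion.
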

\begin{proof}
From our objective function and constraints (including tightened pointwise-in-time SOC state constraints), we can first construct the following augmented state space representation, which includes an additional state variable for capturing violations of the tightened SOC constraint \cite{kirk2004optimal}:
\begin{align}
\Vec{\dot{x}} &= \begin{bmatrix}
        P_{in} - k_h - k_m u^3\\
        b^2 \mathbb{1}(-b) + [b_{max} - b]^2 \mathbb{1}(b_{max} - b)
    \end{bmatrix}\label{eqn:states}
\end{align}

From here, we can construct the following Hamiltonian:
\begin{align}\label{eqn:hamiltonian}
    H(t,x(t),u(t),p(t)) = \\ 
    -u(t) + p_1(t)(P_{in}(t) - k_h 
     - k_m u(t)^3) \nonumber \\ + 
        p_2(t)((b_{u}(t)
        - b(t))^2 \mathbb{1}(b(t) - b_{u}(t)) \nonumber  \\
        + (b(t) - b_{l}(t))^2 \mathbb{1}(b_{l}(t)-b(t))) \nonumber
\end{align}
where $p_1(t)$ and $p_2(t)$ represent the co-states and $\mathbb{1}(x)$ represents the Heaviside step function for a generic argument $x$.

This results in the following co-state equations:
\begin{eqnarray}
    \dot{p}_{1}(t) &=& -2p_2(t)(b_{u}(t) - b(t))\mathbb{1}(b(t) - b_{u}(t)) \\ \nonumber && + p_2(t) (b_{u}(t) - b(t))^2 \delta(b(t) - b_u(t)) \\ \nonumber && + 2 p_2(t) (b(t) - b_l(t)) \mathbb{1}(b_l(t) - b(t)) \\ \nonumber && + p_2(b(t) - b_l(t))^2 \delta(b_l(t) - b(t))\\
    \dot{p}_{2}(t) &=& 0
\label{eqn:costates}
\end{eqnarray}
\noindent where $\delta(x)$ represents the unit impulse function for a generic argument $x$.

Note that when the state constraints are satisfied, $\dot{x}_2 = 0$ (consequently, $x_{2} = 0$ for all time). Furthermore, when pointwise-in-time tightened SOC constraints are inactive (i.e. $b_{l}(t) < b(t) < b_{u}(t)$), all of the Heaviside and impulse functions evaluate to zero and $\dot{p}_1 = 0$ (consequently, $p_{1}$ is constant). Accounting for these facts and noting that $\frac{\partial H}{\partial u} = 0$ at the optimum whenever $b_{l}(t) < b(t) < b_{u}(t)$, the optimal control solution is given by:
\begin{equation}\label{eqn:control}
    u^{*} = \begin{cases}
        u_{max} & b \geq b_{u}(t)\\
        u_{min} & b \leq b_{l}(t)\\
        \sqrt{\frac{-1}{3 k_m p_1}} & \text{otherwise}\\
    \end{cases}
\end{equation}

\noindent As we previously established, when the state constraints are satisfied, $\dot{p}_1 = 0$, which implies that $p_1$ is constant. Therefore, $u^{*}$ when unconstrained must be a constant, thus proving Lemma \ref{lem:constant}.
\end{proof}

Eqn. \ref{eqn:control} and Lemma \ref{lem:constant} provide a simple, yet useful result for the control of the ASV. However, the indirect methods used in the proof do not provide a straightforward mechanism for solving for $p_1$, which can vary between consecutive constraint-inactive intervals. An ILC-based method for estimating $p_1$ is detailed in the next section.

\section{REAL-TIME REALIZATION OF THE OPTIMAL VELOCITY PROFILE THROUGH ITERATIVE LEARNING}
The previous section has established that the optimal velocity control strategy is a switching one, where the optimal velocity is constant during each constraint-inactive interval. Two complications arise from the aforementioned analysis. First, the actual controller must be implemented in sampled (discrete) time, in the presence of sensor noise. In such a scenario, a switching controller is prone to significant chatter, which must be mitigated for the resulting controller to be practical. Secondly, while the previously derived theory establishes that the optimal velocity is constant during each constraint-inactive interval, it does not establish the optimal value of that constant velocity. This section addresses both of these complications.

\subsection{Discretization}
When we discretize time, the switching controller will chatter as the SOC barrier functions go from active to inactive (and vice versa). This chatter will be further complicated in actual experimental implementations where the SOC measurement is corrupted by noise. To address these complications, we modify the control solution to include the ``buffer'' shown in Eqn. \ref{eqn:buffer}:

\begin{equation}\label{eqn:buffer}
    u(t) =  \begin{cases}
        \frac{b(t) - b_{l}(t)}{\delta} u^{*}(t) + \left(1 - \frac{b(t) - b_l(t)}{\delta}\right) u_{min} \\ \hspace{2cm}\text{when }0 < b(t) - b_l(t) < \delta\\
        \frac{b_{u}(t) - b(t)}{\delta} u^{*}(t) + \left(1 - \frac{b_{u}(t) - b(t)}{\delta}\right) u_{max} \\ \hspace{2cm}\text{when }0 < b_{u}(t) - b(t) < \delta\\
    \end{cases}
\end{equation}
where $\delta$ is a tunable parameter that represents the width of the buffer in Watt-hours and $u^{*}$ represents the optimal unconstrained velocity.

\subsection{Iterative Learning}
\begin{figure}
    \centering
    \includegraphics[width=0.75\linewidth]{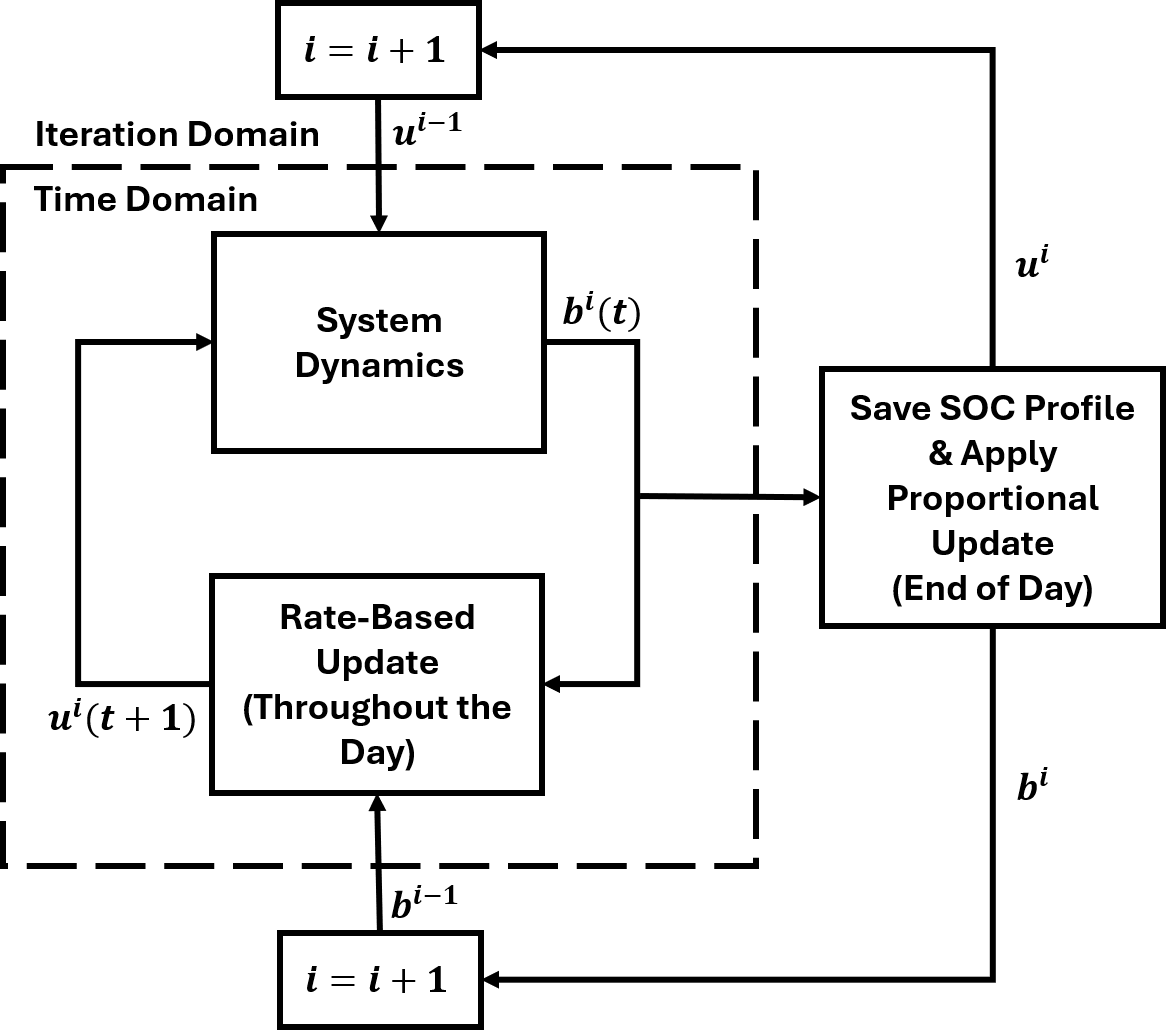}
    \caption{Block diagram demonstrating the iterative approach to learning the optimal constant speed.}
    \label{fig:ilc_block}
\end{figure}
To compute the optimal value of $p_1$ (which is known to be constant during each constraint-inactive interval but otherwise unknown), and thus the value of the commanded velocity, we utilize an iterative learning approach as depicted in Fig.\ref{fig:ilc_block}. To do this, we begin with an initial guess of $p_1$, then update the estimate for this co-state value as per the update laws defined in Eqns. \ref{eqn:proportional_update} and \ref{eqn:rate_update}. We define each iteration $i$ to be a 24-hour period (one solar cycle). The update law is given in two parts. The first of these parts is a once-per-iteration update law given by:
\begin{equation}\label{eqn:proportional_update}
    u^{i+1} = u^{i} + k_p (b^{i}(t_f) - b_{des}^{i})
\end{equation}
where $k_p$ is a proportional learning gain that acts upon the difference between the SOC at the end of iteration  (represented by $b^{i}(t_f)$) and the desired terminal state of charge ($b_{des}^{i}$). This update is applied at the end of each iteration (day).

The second part of the update law represents a continuous rate-based update law that is given by:
\begin{equation}\label{eqn:rate_update}
    u^{i}(t+1) = u^{i} + k_d (b^{i}(t) - b^{i-1}(t))
\end{equation}
where $k_d$ is a rate-based gain that acts upon the difference between the current SOC and the SOC at the same time in the previous iteration (day). This update is then applied to the velocity from the same time in the previous iteration. This update law is applied continually following the first iteration and serves as a ``damping'' term on the control updates.

To demonstrate the convergence of the above strategy to an optimal costate value $p_1$, we apply the strategy to a simulation where the environmental conditions are repeated across each iteration. The results of this simulation are shown in Fig. \ref{fig:costate_v_iteration}, and the velocities corresponding to the costate values are shown in Fig.\ref{fig:velocity_v_iteration}. From these figures, we can see that the costate converges to the optimal value in under 15 iterations for $k_p = 5 \times 10^{-5}$ and $k_d = 1 \times 10^{-5}$. This optimal costate value is $p_1 = -0.0012$, which corresponds to a speed of $u = 1.83$ $m \cdot s^{-1}$.

\begin{figure}
    \centering
    \includegraphics[width = \columnwidth]{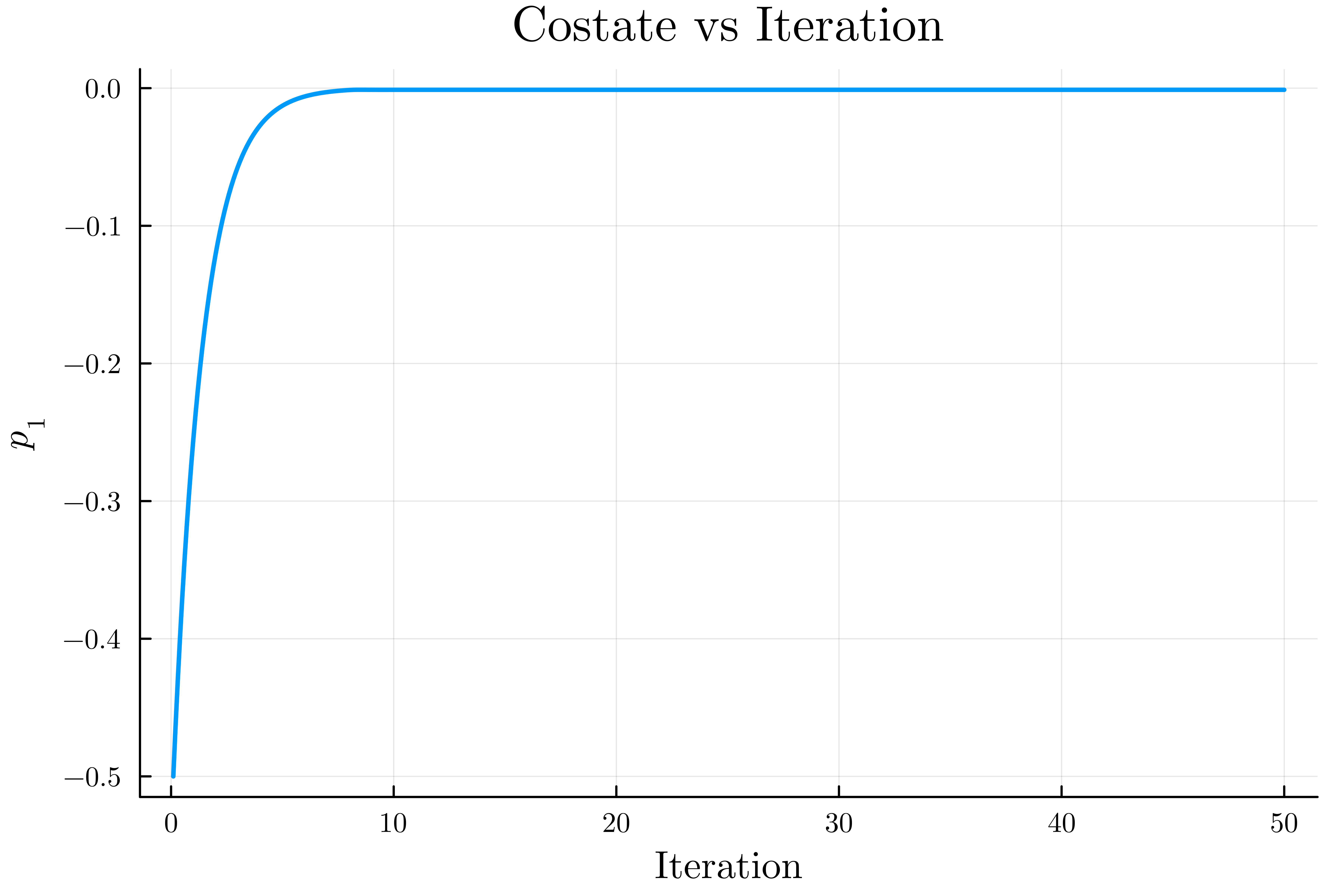}
    \caption{Costate vs iteration under a consistent day-to-day solar irradiance profile. This simulation demonstrates convergence of the ILC algorithm under consistent solar conditions.}
    \label{fig:costate_v_iteration}
\end{figure}

\begin{figure}
    \centering
    \includegraphics[width = \columnwidth]{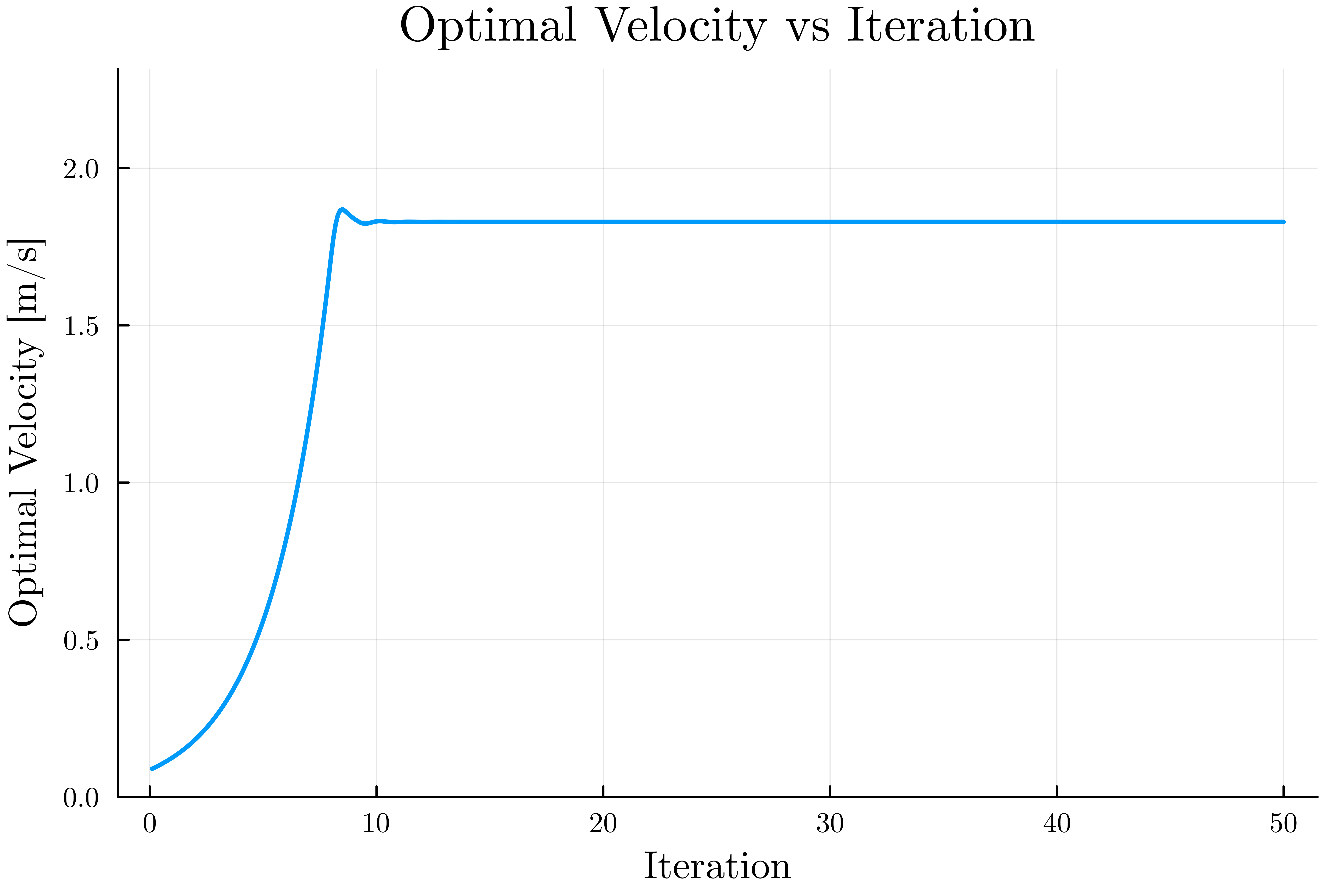}
    \caption{Optimal Velocity vs iteration under a consistent day-to-day solar irradiance profile. This simulation demonstrates convergence of the ILC algorithm under consistent solar conditions.}
    \label{fig:velocity_v_iteration}
\end{figure}

In the following section, we demonstrate the performance of the proposed control strategy against other comparison strategies in simulation.

\section{RESULTS}
To benchmark the performance of the proposed controller, we compare it against the performance of other strategies in simulation.

\subsection{Comparison Strategies}
As an initial basis for comparison, we compute the constant velocity for which the energy expended over a one-year simulation duration is exactly equal to the solar energy available. To do this, we solve the below equation for $u$:
\begin{equation}
    \int_{0}^{t_f} k_m u^3 dt = \int_{0}^{t_f} P_{in}(t) dt
\end{equation}
where $k_m$ represents a gain that represents the power consumed by the ASV's motor, $u$ is the constant velocity of the ASV, and $P_{in}(t)$ is the energy produced by the ASV from the solar resource from times $0$ to $t_f$. We allow the ASV to travel at this velocity without being subject to the SOC constraints to establish the upper limit on ASV performance. Note that this is not in fact a tight upper bound, as the ASV will be subject to SOC constraints in reality.

As a second comparison control strategy, we implement the aforementioned constant-velocity strategy only when the tightened SOC constraints are inactive. During periods in which the previously computed constant velocity would lead to violation of tightened SOC constraints, the velocity is set to the lower or upper limit in order to satisfy the constraint at equality.

Finally, our indirect methods + ILC strategy and the two comparison strategies described above are compared with an MPC strategy as implemented in \cite{govindarajan2023}.

To evaluate these strategies, we considered two simulation scenarios, where we seek to maximize the distance traveled by the ASV over a mission period of one year. In the first scenario, the idealized solar irradiance model was used as the energetic resource. In the second scenario, ERA5 data for Cape Hatteras was used \cite{hersbach2020era5}.

\subsection{Simulation Results}

For the simulations performed under the idealized solar resource profile, from Fig. \ref{fig:dist-plot}, we can see that the proposed learning controller outperforms the constant velocity benchmarks, while nearly matching the performance of MPC. It is noteworthy that the MPC implementation is based on a solar resource forecast, and this forecast is assumed to be perfect for the purpose of optimization. The ILC-based approach does not require any forecast and performs almost identically under the idealized solar resource model. The daily-averaged velocity and SOC are plotted against time in Figs. \ref{fig:vel-average} and \ref{fig:soc-average}. Figs. \ref{fig:vel-average} and \ref{fig:soc-average} illustrate an increased level of volatility in the ILC approach, as ILC attempts to learn the optimal velocity. This volatility, however, is not associated with any appreciable performance reduction.

Similar results can be seen when the algorithms are applied to the real-world solar irradiance profile. From Fig. \ref{fig:2022-vel-average}, we can see that the ILC-based strategy is particularly sensitive to the fluctuations in solar irradiance that are present in the real-world data. This is attributable to the fact that the estimated optimal velocity value requires time to adjust to day-to-day variations in solar irradiance due to atmospheric conditions, whereas the MPC strategy has the benefit of accessing a perfect forecast. Accordingly, the state of charge is particularly volatile as it reacts to the changing velocity and solar profile as seen in Fig.\ref{fig:2022-soc-average}. Nevertheless, the ILC approach proposed in this work nearly matches the MPC performance, even in the presence of real-world solar fluctuations.

The ILC approach falls just short of the performance levels yielded by MPC, yet it possesses tremendous advantages. First, it requires no forecast of the solar resource. Secondly, the ILC approach does not require the heuristic tuning of a terminal reward function within a real-time optimization. Finally, the learning controller is simpler to implement and requires less computational power. This is due to the fact that the learning controller only has two gains that must be tuned for performance, whereas the MPC requires a detailed model. 

The presented strategies were implemented in Julia \cite{bezanson2017julia} and simulated on a computer with an Intel\textregistered Xeon\textregistered W-2125 4.0 GHZ CPU and 32GB of RAM. For the same year-long simulation at a time step of 6 minutes, the ILC approach took roughly 20 seconds to simulate, whereas the MPC strategy (implemented using the JuMP Ipopt optimizer \cite{DunningHuchetteLubin2017}) took 21 hours and 36 minutes. This demonstrates a significant improvement in computational efficiency using the presented ILC approach.

\begin{figure}
    \centering
    \includegraphics[width = \columnwidth]{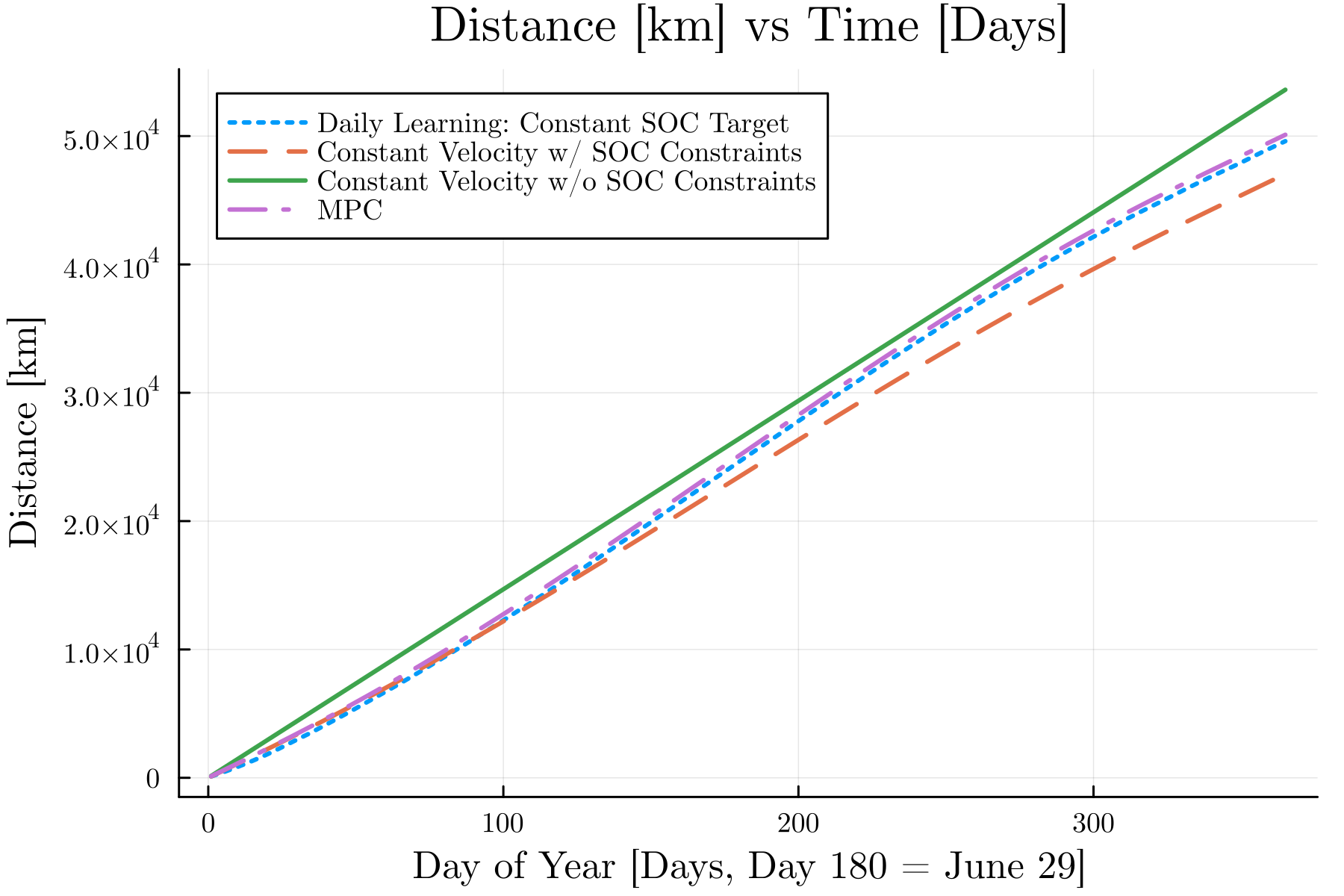}
    \caption{Distance traveled vs. time under the idealized solar irradiance model.}
    \label{fig:dist-plot}
\end{figure}

\begin{figure}
    \centering
    \includegraphics[width = \columnwidth]{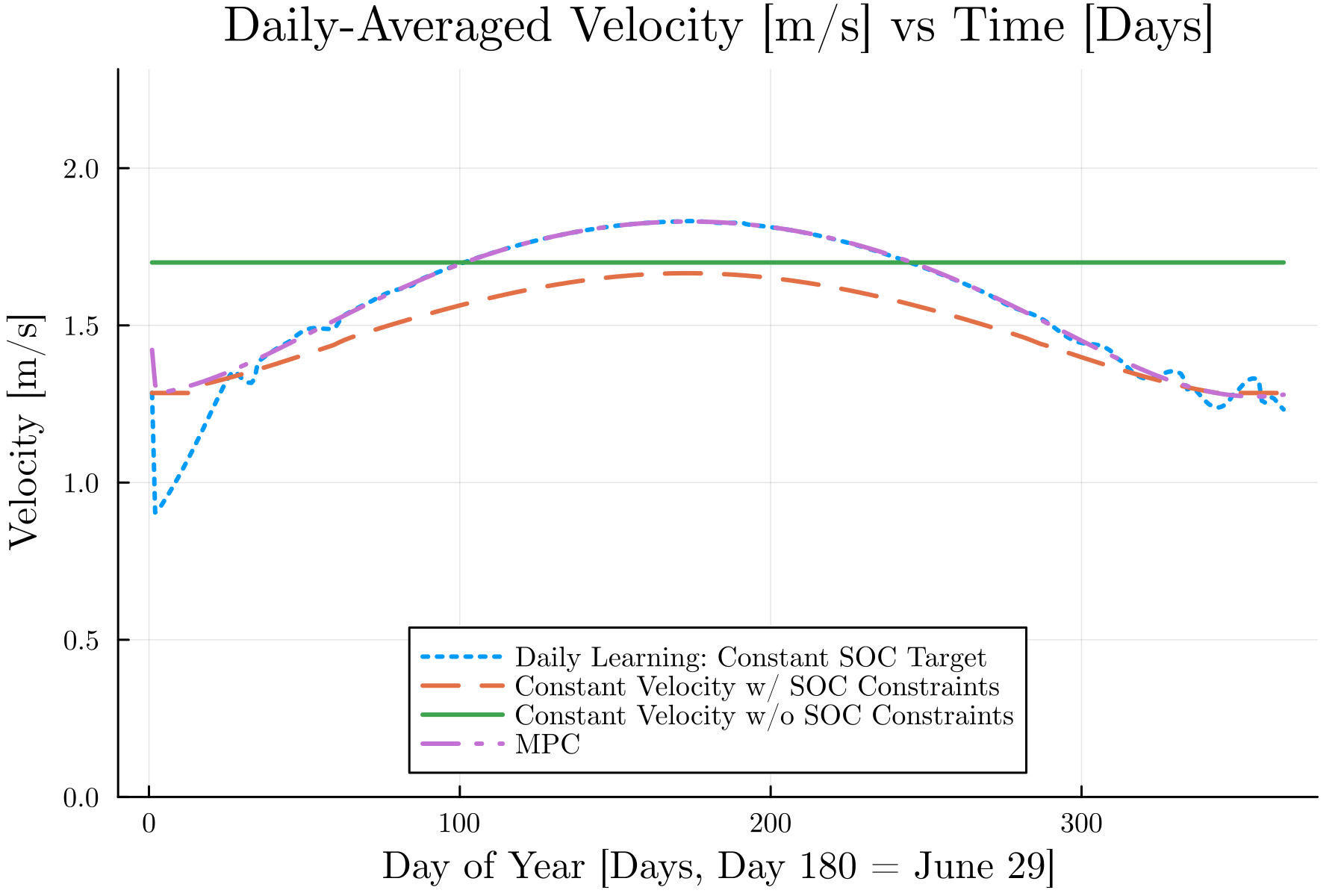}
    \caption{Velocity vs. time under the idealized solar irradiance model.}
    \label{fig:vel-average}
\end{figure}

\begin{figure}
    \centering
    \includegraphics[width = \columnwidth]{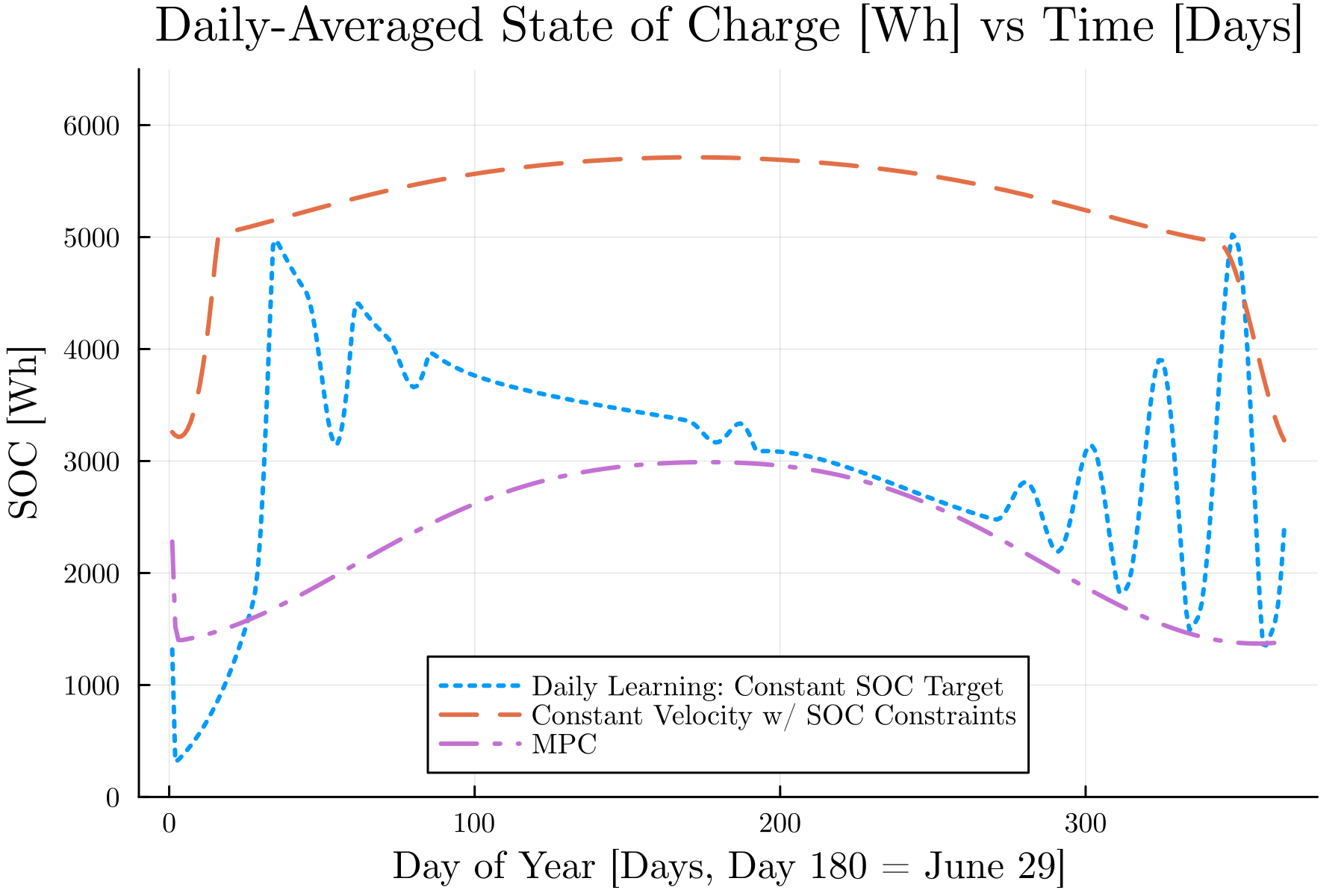}
    \caption{SOC vs. time under the idealized solar irradiance model.}
    \label{fig:soc-average}
\end{figure}

\begin{figure}
    \centering
    \includegraphics[width = \columnwidth]{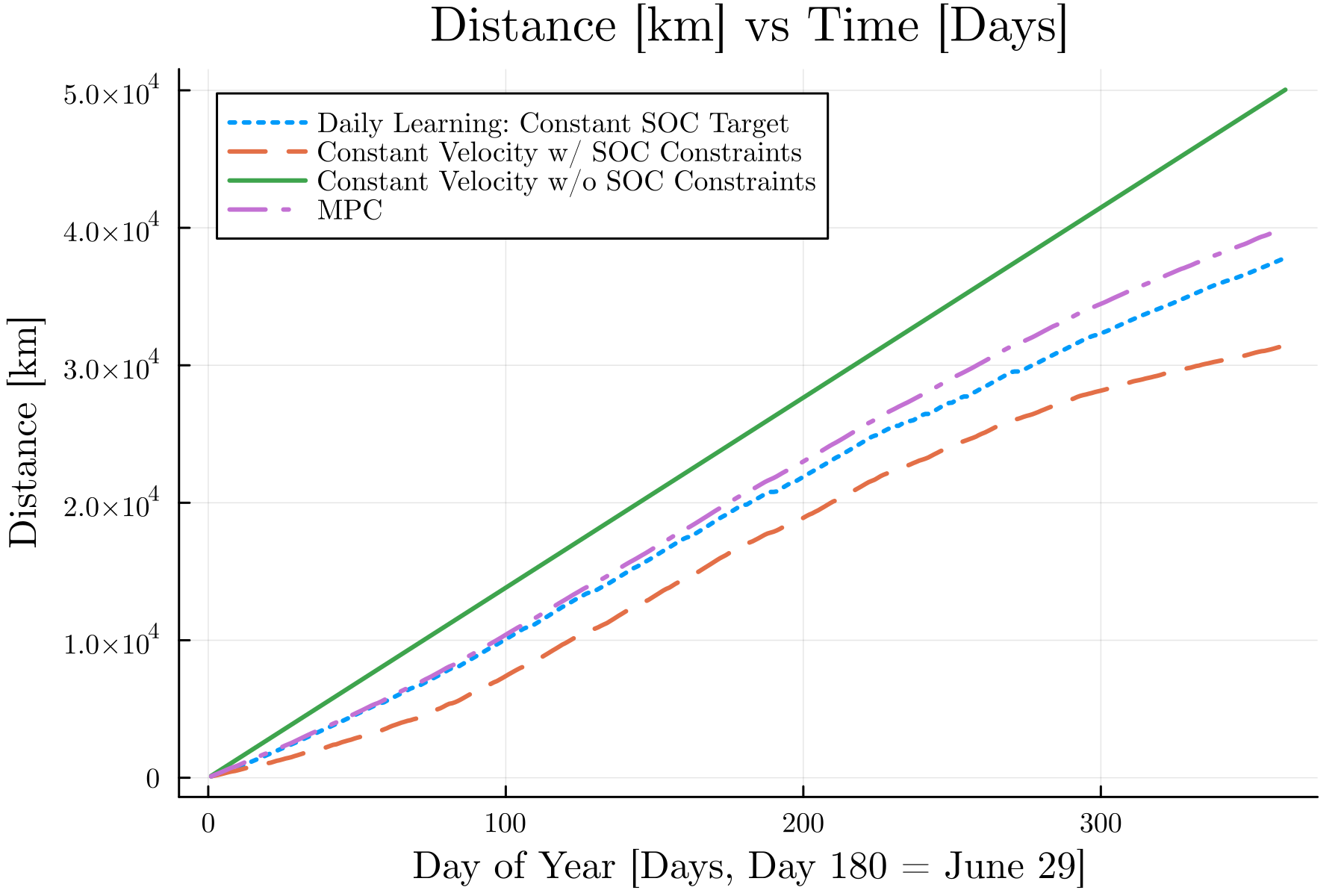}
    \caption{Distance traveled vs. time under the actual solar irradiance data for Cape Hatteras.}
    \label{fig:2022-dist-plot}
\end{figure}

\begin{figure}
    \centering
    \includegraphics[width = \columnwidth]{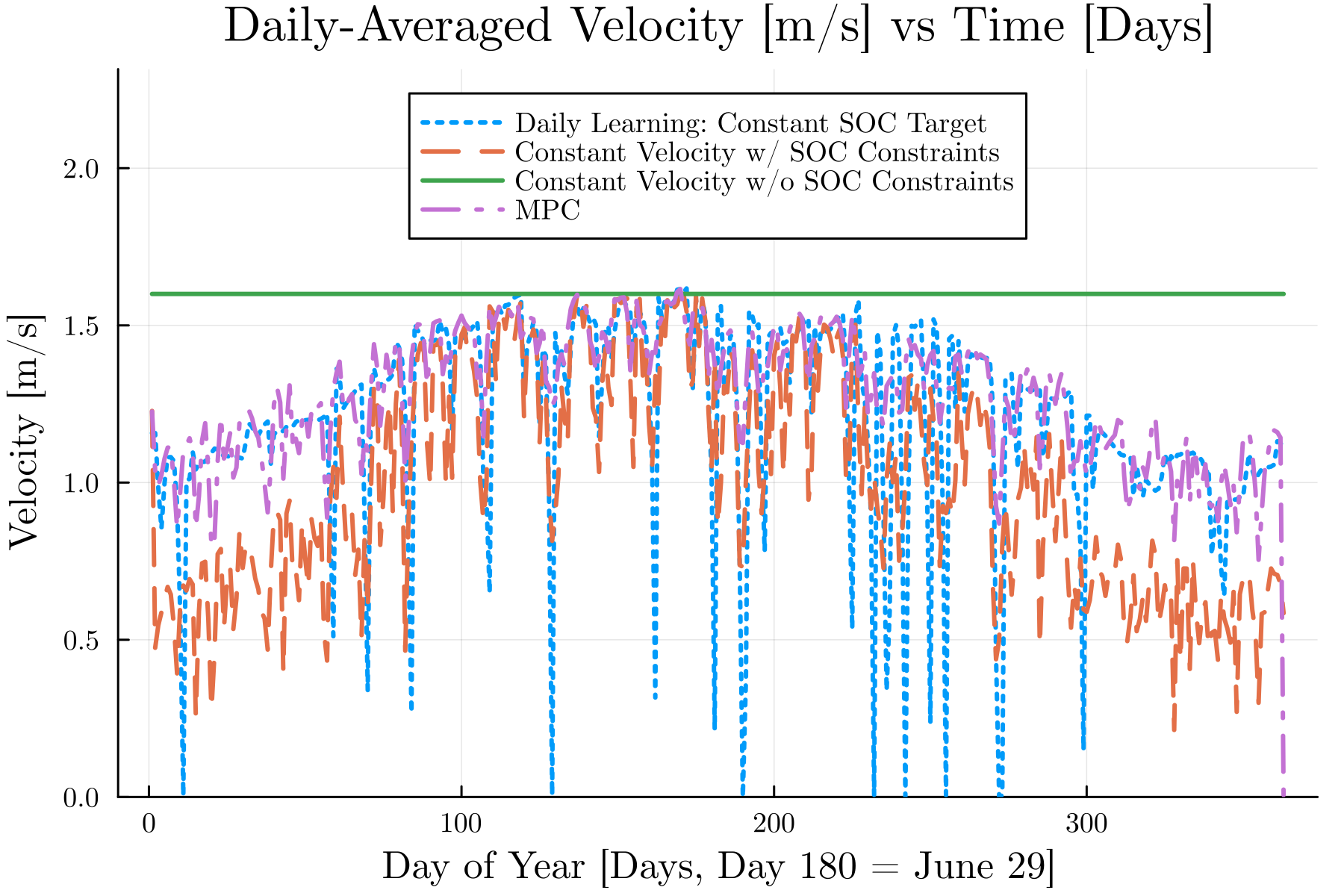}
    \caption{Velocity vs. time under the actual solar irradiance data for Cape Hatteras.}
    \label{fig:2022-vel-average}
\end{figure}

\begin{figure}
    \centering
    \includegraphics[width = \columnwidth]{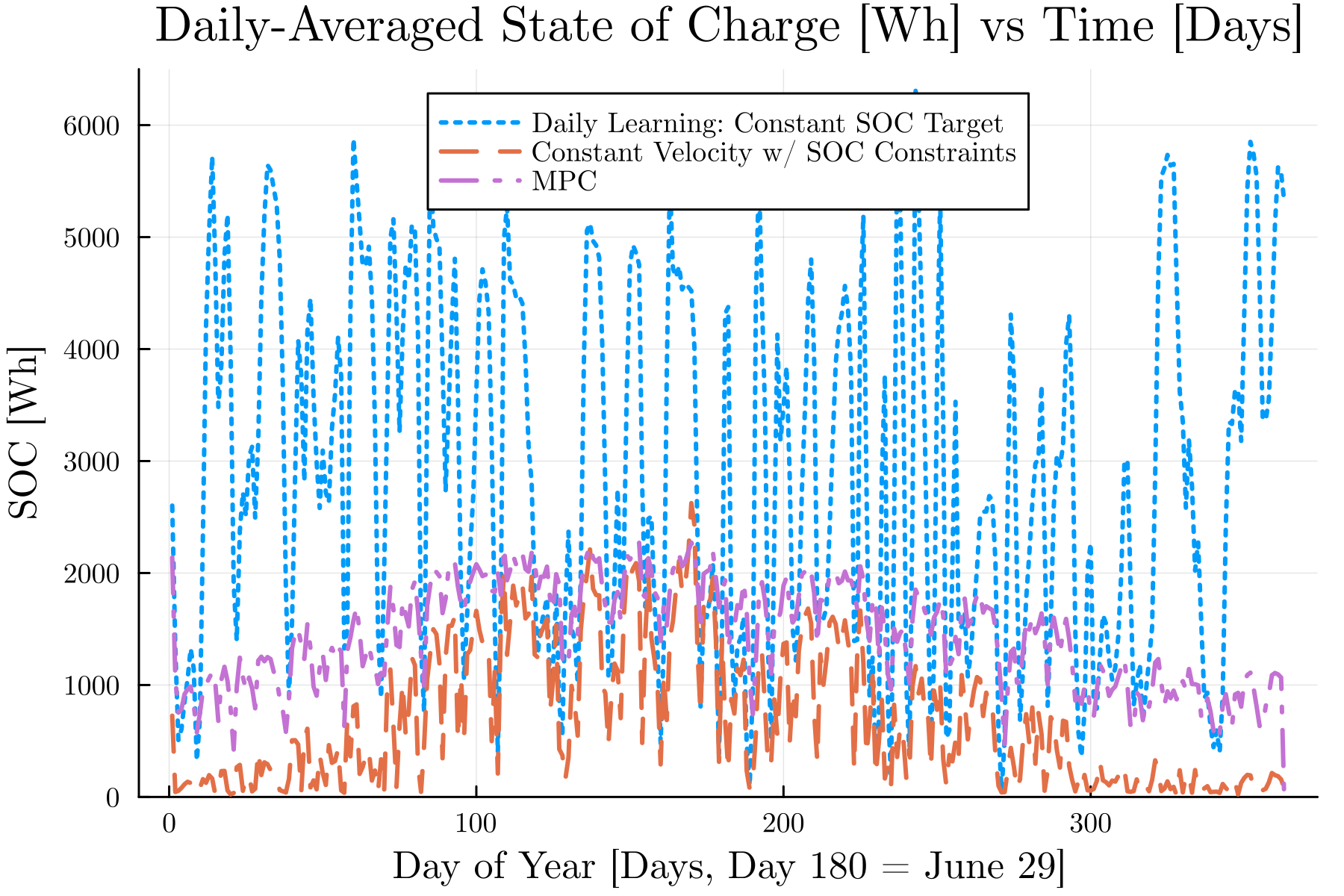}
    \caption{SOC vs. time under the actual solar irradiance data for Cape Hatteras.}
    \label{fig:2022-soc-average}
\end{figure}

\section{CONCLUSIONS AND FUTURE WORK}
In this work, we considered the problem of long-horizon or persistent velocity trajectory optimization for a solar-powered autonomous surface vessel (ASV). Because of the unique features of this system that do not immediately guarantee persistent feasibility of pointwise-in-time state constraints, we constructed barrier functions to tighten these constraints by the minimum amount required to ensure persistent feasibility. We then utilized indirect methods to arrive at a provably optimal switching control law, where we subsequently used iterative learning control (ILC) to obtain estimates of the optimal constant velocity during each constraint-inactive interval. Using real solar data and a model of the SeaTrac SP-48 ASV, we showed that this formulation outperforms simple benchmark strategies. Furthermore, we showed that the proposed approach nearly matches the performance of a more complex MPC strategy that requires a solar forecast.

In future work, we will focus on replacing our distance-maximization objective with an actual information-maximization mechanism. Furthermore, we will focus on combining velocity trajectory optimization with path planning, all with the goal of maximizing information gathered. Finally, we will focus on experimentally validating the developed control algorithms on the SP-48 ASV presently in our possession.







\bibliographystyle{IEEEtran}
\bibliography{references}

\end{document}